\documentclass{article}


\usepackage[final, nonatbib]{neurips_2020}

\usepackage[utf8]{inputenc} 
\usepackage[T1]{fontenc}    
\usepackage{hyperref}       
\usepackage{url}            
\usepackage{booktabs}       
\usepackage{amsfonts}       
\usepackage{nicefrac}       
\usepackage{microtype}      

\usepackage{wrapfig}
\usepackage{algorithm} 
\usepackage{algorithmic}
\usepackage{graphicx}
\usepackage{subfigure}
\usepackage{booktabs} 
\usepackage{mathtools}
\usepackage{amssymb}
\usepackage{amsmath}
\usepackage{amsthm}
\usepackage{thmtools,thm-restate}
\usepackage{multirow}
\usepackage{subfigure}
\usepackage{hyperref}

\theoremstyle{definition}

\newtheorem{lemma}{Lemma}[section]

\newcommand*{\dif}{\mathop{}\!\mathrm{d}}
\newcommand{\ie}{\textit{i}.\textit{e}.}
\newcommand{\eg}{\textit{e}.\textit{g}.}
\newcommand{\FF}{\mathcal{F}}
\newcommand{\sspace}{\mathcal{S}}
\newcommand{\aspace}{\mathcal{A}}
\newcommand{\RR}{\mathbb{R}}
\newcommand{\defeq}{\vcentcolon=}

\newcommand{\dtv}{d_{\text{TV}}}

\makeatletter
\newcommand{\printfnsymbol}[1]{%
  \textsuperscript{\@fnsymbol{#1}}%
}
\makeatother

\title{Model-based Policy Optimization with \\ Unsupervised Model Adaptation}

\author{%
  Jian Shen$^\ddagger$, Han Zhao$^{\star}$\thanks{Work done while at Carnegie Mellon University. $^\dagger$Weinan Zhang is the corresponding author.}~, Weinan Zhang$^{\ddagger \dagger}$ , Yong Yu$^\ddagger$ \\
  $^\ddagger$Shanghai Jiao Tong  University, $^{\star}$D. E. Shaw \& Co\\
  \texttt{\{rockyshen, wnzhang, yyu\}@apex.sjtu.edu.cn} \\
  \texttt{han.zhao@cs.cmu.edu} \\
}

\begin{document}

\maketitle

\begin{abstract}
Model-based reinforcement learning methods learn a dynamics model with real data sampled from the environment and leverage it to generate simulated data to derive an agent. However, due to the potential distribution mismatch between simulated data and real data, this could lead to degraded performance. Despite much effort being devoted to reducing this distribution mismatch, existing methods fail to solve it explicitly. In this paper, we investigate how to bridge the gap between real and simulated data due to inaccurate model estimation for better policy optimization. To begin with, we first derive a lower bound of the expected return, which naturally inspires a bound maximization algorithm by aligning the simulated and real data distributions. To this end, we propose a novel model-based reinforcement learning framework AMPO, which introduces unsupervised model adaptation to minimize the integral probability metric (IPM) between feature distributions from real and simulated data. Instantiating our framework with Wasserstein-1 distance gives a practical model-based approach. Empirically, our approach achieves state-of-the-art performance in terms of sample efficiency on a range of continuous control benchmark tasks.
\end{abstract}

\section{Introduction}
\label{sec:intro}
In recent years, model-free reinforcement learning (MFRL) has achieved tremendous success on a wide range of simulated domains, \eg, video games~\cite{mnih2015human}, complex robotic tasks~\cite{sac}, just to name a few. However, model-free methods are notoriously data inefficient and often require a massive number of samples from the environment. In many high-stakes real-world applications, \eg, autonomous driving, online education, etc., it is often expensive, or even infeasible, to collect such large-scale datasets. On the other hand, model-based reinforcement learning (MBRL), in contrast, is considered to be an appealing alternative that is able to substantially reduce sample complexity~\cite{sun2018model, mbbl}.

At a colloquial level, model-based approaches build a predictive model of the environment dynamics and generate simulated rollouts from it to derive a policy~\cite{mbpo,slbo, simple} or a planner~\cite{pets, hafner2018learning}. However, the asymptotic performance of MBRL methods often lags behind their model-free counterparts, mainly due to the fact that the model learned from finite data can still be far away from the underlying dynamics of the environment. To be precise, even equipped with a high-capacity model, such model error still exists due to the potential distribution mismatch between the training and generating phases, \ie, the state-action input distribution used to train the model is different from the one generated by the model~\cite{talvitie2014model}. Because of this gap, the learned model may give inaccurate predictions on simulated data and the errors can compound for multi-step rollouts~\cite{asadi2018lipschitz}, which will be exploited by the follow-up policy optimization or planning procedure, leading to degraded performance.

In the literature, there is a fruitful line of works focusing on reducing the distribution mismatch problem by improving the approximation accuracy of model learning, or by designing careful strategies when using the model for simulation. For model learning, different architectures~\cite{asadi2018lipschitz, asadi2019combating, pets} and loss functions \cite{farahmand2018iterative, mi} have been proposed to mitigate overfitting or improve multi-step predictions so that the simulated data generated by the model are more like real. For model usage, delicate rollout schemes \cite{mbpo, steve, nguyenimproving, xiao2019learning} have been adopted to exploit the model before the simulated data departure from the real distribution.
Although these existing methods help alleviate the distribution mismatch, this problem still exists. 

In this paper, we take a step further towards the goal of explicit mitigation of the distribution mismatch problem for better policy optimization in Dyna-style MBRL \cite{dyna}. To begin with, we derive a lower bound of the expected return in the real environment, which naturally inspires a bound maximization algorithm according to the theory of unsupervised domain adaptation. To this end, we propose a novel model-based framework, namely AMPO (Adaptation augmented Model-based Policy Optimization), by introducing a model adaptation procedure upon the existing MBPO \cite{mbpo} method. To be specific, model adaptation encourages the model to learn invariant feature representations by minimizing integral probability metric (IPM) between the feature distributions of real data and simulated data. By instantiating our framework with Wasserstein-1 distance \cite{ot}, we obtain a practical method. We evaluate our method on challenging continuous control benchmark tasks, and the experimental results demonstrate that the proposed AMPO achieves better performance against state-of-the-art MBRL and MFRL methods in terms of sample efficiency.

\section{Preliminaries}
We first introduce the notation used throughout the paper and briefly discuss the problem setup of reinforcement learning and concepts related to integral probability metric. 
\paragraph{Reinforcement Learning}
A Markov decision process (MDP) is defined by the tuple ($\mathcal{S}, \mathcal{A}, T, r, \gamma$), where $\mathcal{S}$ and $\mathcal{A}$ are the state and action spaces, respectively. Throughout the paper, we assume that the state space is continuous and compact. $\gamma \in (0,1)$ is the discount factor. $T(s'\mid s,a)$ is the transition density of state $s'$ given action $a$ made under state $s$, and the reward function is denoted as $r(s,a)$. The goal of reinforcement learning (RL) is to find the optimal policy $\pi^*$ that maximizes the expected return (sum of discounted rewards), denoted by $\eta$:
\begin{equation}
\label{eq: rl-obj}
\pi^* \defeq \mathop{\arg \max}_\pi \eta[\pi]=\mathop{\arg \max}_\pi \mathbb{E}_\pi \left[\sum_{t=0}^\infty \gamma^t r(s_t,a_t) \right],
\end{equation}
where $s_{t+1} \sim T(s\mid s_t,a_t)$ and $a_t \sim \pi(a\mid s_t)$. In practice, the groundtruth transition $T$ is unknown and MBRL methods aim to construct a model $\hat{T}$ of the transition dynamics, using data collected from interaction with the MDP. Furthermore, different from several previous MBRL works \cite{pets, slbo}, the reward function $r(s,a)$ is also unknown throughout the paper, and an agent needs to learn the reward function simultaneously. 

For a policy $\pi$, we define the normalized occupancy measure, $\rho_{\hat{T}}^\pi(s,a)$ \cite{gail}, as the discounted distribution of the states and actions visited by the policy $\pi$ on the dynamics model $\hat{T}$:
$\rho_{\hat{T}}^{\pi}(s,a)=(1-\gamma)\cdot\pi(a\mid s)\sum_{t=0}^\infty \gamma^t P_{\hat{T},t}^{\pi}(s),$
where $P_{\hat{T},t}^{\pi}(s)$ denotes the density of state $s$ visited by $\pi$ under $\hat{T}$ at time step $t$. Similarly, $\rho_{T}^\pi(s,a)$ represents the discounted occupancy measure visited by $\pi$ under the real dynamics $T$. Using this definition, we can equivalently express the  objective function as follows: 
$
\eta[\pi]=\mathbb{E}_{ \rho_{T}^\pi(s,a)}[r(s,a)]=\int \rho_{T}^\pi(s,a) r(s,a) \dif s \dif a.    $
To simplify the notation, we also define the normalized state visit distribution as $\nu_{T}^\pi(s)\defeq (1-\gamma)\sum_{t=0}^\infty \gamma^t P_{T,t}^\pi(s)$.

\paragraph{Integral Probability Metric}
Integral probability metric (IPM) is a family of discrepancy measures between two distributions over the same space~\cite{ipm, sriperumbudur2009integral}. 
Specifically, given two probability distributions $\mathbb{P}$ and $\mathbb{Q}$ over $\mathcal{X}$, the $\FF$-IPM is defined as 
\begin{equation}
\label{eq:ipm}
d_\mathcal{F}(\mathbb{P}, \mathbb{Q}) \defeq \sup_{f\in\mathcal{F}}~\mathbb{E}_{x\sim \mathbb{P}}[f(x)] - \mathbb{E}_{x\sim \mathbb{Q}}[f(x)],
\end{equation}
where $\mathcal{F}$ is a class of witness functions $f:\mathcal{X} \rightarrow \mathbb{R}$. By choosing different function class $\FF$, IPM reduces to many well-known distance metrics between probability distributions. 
In particular, the Wasserstein-1 distance \cite{ot} is defined using the 1-Lipschitz functions $ \{f: \left \| f \right\|_L \leq 1 \}$, where the Lipschitz semi-norm $\|\cdot\|_L$ is defined as $\left \| f \right\|_L = \sup_{x\neq y} |f(x)-f(y)| / |x-y|$.  
Furthermore, total variation is also a kind of IPM and we use $\dtv(\cdot,\cdot)$ to denote it. 

\section{A Lower Bound for Expected Return}
\label{sec:theory}

In this section, we derive a lower bound for the expected return function in the context of deep MBRL with continuous states and non-linear stochastic dynamics. The lower bound concerns about the expected return, \ie, Eq.~\eqref{eq: rl-obj} and is expressed in the following form \cite{mbpo}:
\begin{equation}
\eta[\pi] \geq \hat{\eta}[\pi] - C,
\end{equation}
where $\hat{\eta}[\pi]$ denotes the expected return of running the policy $\pi$ on a learned dynamics model $\hat{T}(s'\mid s, a)$ and the term $C$ is what we wish to construct. Normally, the dynamics model $\hat{T}$ is learned with experiences $(s,a,s')$ collected by a behavioral policy $\pi_D$ in the real environment dynamics $T$. Typically, in an online MBRL method with iterative policy optimization, the behavioral policy $\pi_D$ represents a collection of past policies. Once we have derived this lower bound, we can naturally design a model-based framework to optimize the RL objective by maximizing the lower bound. Due to page limit, we defer all the proofs to the appendix.

Recall that in MBRL, we have real data $(s,a,s')$ collected in the real dynamics $T$ by the behavioral policy $\pi_D$ and will generate simulated data using the dynamics model $\hat{T}$ with the current policy $\pi$. We begin by showing that for any state $s'$, the discrepancy between its visit distributions in real data and simulated data admits the following decomposition.
\begin{restatable}{lemma}{decomposition}
	\label{lemma}
	Assume the initial state distributions of the real dynamics $T$ and the dynamics model $\hat{T}$ are the same. For any state $s'$, assume there exists a witness function class $\mathcal{F}_{s'} = \{f:\sspace\times \aspace\to \RR\}$ such that $\hat{T}(s'\mid \cdot, \cdot):\sspace\times\aspace\to\RR$ is in $\mathcal{F}_{s'}$. Then the following holds:
		\begin{equation}
		|\nu^{\pi_D}_{T}(s') - \nu^\pi_{\hat{T}}(s')| \leq \gamma d_{\mathcal{F}_{s'}}(\rho_{T}^{\pi_D}, \rho_{\hat{T}}^{\pi}) + \gamma \mathbb{E}_{(s,a) \sim \rho_{T}^{\pi_D}} \left|T(s'\mid s,a) -\hat{T}(s'\mid s,a)\right|. 
		\end{equation}
\end{restatable}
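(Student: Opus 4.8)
The plan is to first establish a Bellman-flow identity for the normalized state visit distribution, then perform an add-and-subtract decomposition and bound the two resulting pieces separately.

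\textbf{Step 1 (flow identity).} Starting from the definition $\nu^\pi_T(s') = (1-\gamma)\sum_{t=0}^\infty \gamma^t P^\pi_{T,t}(s')$, I would peel off the $t=0$ term and re-index the tail. Since $P^\pi_{T,t+1}(s') = \int T(s'\mid s,a)\,\pi(a\mid s)\,P^\pi_{T,t}(s)\dif s\dif a$, the tail sum collapses against the occupancy measure, giving
\begin{equation}
\nu^\pi_T(s') = (1-\gamma)P_0(s') + \gamma\int T(s'\mid s,a)\,\rho^\pi_T(s,a)\dif s\dif a,
\end{equation}
and the analogous identity with $\hat{T}$, $\pi$ in place of $T$, $\pi$. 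Compactness of $\sspace$ together with $\gamma<1$ guarantees absolute convergence, so the interchange of sum and integral (Fubini) is legitimate.

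\textbf{Step 2 (decomposition).} Subtracting the two identities, the term $(1-\gamma)P_0(s')$ cancels by the shared-initial-distribution assumption, leaving $\gamma\big(\int T(s'\mid s,a)\rho^{\pi_D}_T(s,a) - \int \hat{T}(s'\mid s,a)\rho^{\pi}_{\hat{T}}(s,a)\big)\dif s\dif a$. I would insert $\pm\int\hat{T}(s'\mid s,a)\rho^{\pi_D}_T(s,a)\dif s\dif a$ to split this into (i) $\gamma\int\big(T(s'\mid s,a)-\hat{T}(s'\mid s,a)\big)\rho^{\pi_D}_T(s,a)\dif s\dif a$ and (ii) $\gamma\int\hat{T}(s'\mid s,a)\big(\rho^{\pi_D}_T(s,a)-\rho^{\pi}_{\hat{T}}(s,a)\big)\dif s\dif a$, then apply the triangle inequality.

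\textbf{Step 3 (bounding the pieces).} For (i), moving the absolute value inside the integral and using $\rho^{\pi_D}_T\geq 0$ yields exactly $\gamma\,\mathbb{E}_{(s,a)\sim\rho^{\pi_D}_T}\big|T(s'\mid s,a)-\hat{T}(s'\mid s,a)\big|$. For (ii), I would recognize it as $\gamma\big(\mathbb{E}_{(s,a)\sim\rho^{\pi_D}_T}[\hat{T}(s'\mid s,a)] - \mathbb{E}_{(s,a)\sim\rho^{\pi}_{\hat{T}}}[\hat{T}(s'\mid s,a)]\big)$; since $\hat{T}(s'\mid\cdot,\cdot)\in\mathcal{F}_{s'}$ by hypothesis, the definition of the $\FF$-IPM in Eq.~\eqref{eq:ipm} bounds this by $\gamma\, d_{\mathcal{F}_{s'}}(\rho^{\pi_D}_T,\rho^{\pi}_{\hat{T}})$ (using that $\mathcal{F}_{s'}$ is symmetric, as it is for Wasserstein-1 and total variation, so the bound holds with the absolute value). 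Summing the two bounds gives the claim.

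The only delicate point is Step 1: correctly justifying the re-indexing that converts the discounted sum of per-step state densities into a single integral against the occupancy measure $\rho^\pi_T$, and the accompanying interchange of summation and integration — everything after that is a two-line add-and-subtract plus the definition of the IPM. It is also worth flagging explicitly that the $\FF_{s'}$ in the IPM bound must be symmetric for the absolute value to pass through, which is immediate for the concrete metrics used later in the paper.
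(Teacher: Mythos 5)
Your proposal is correct and follows essentially the same route as the paper's proof: the Bellman-flow identity for $\nu^\pi_{\hat{T}}$, cancellation of the initial-state term, the add-and-subtract pivot through $\mathbb{E}_{(s,a)\sim\rho^{\pi_D}_T}[\hat{T}(s'\mid s,a)]$, and bounding the two pieces by the pointwise model error and the $\mathcal{F}_{s'}$-IPM respectively. Your explicit remark that $\mathcal{F}_{s'}$ must be symmetric (or contain $-\hat{T}(s'\mid\cdot,\cdot)$) for the absolute value to be controlled by the IPM is a small but legitimate point that the paper leaves implicit.
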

Lemma \ref{lemma} states that the discrepancy between two state visit distributions for each state is upper bounded by the dynamics model error for predicting this state and the discrepancy between the two state-action occupancy measures. Intuitively, it means that when both the input state-action distributions and the conditional dynamics distributions are close then the output state distributions will be close as well. Based on this lemma, now we derive the main result that gives a lower bound for the expected return.

\begin{restatable}{theorem}{main}
	\label{theorem}
	Let $R\defeq \sup_{s,a}r(s,a) < \infty$, $\FF\defeq \cup_{s'\in\sspace}\FF_{s'}$ and define $\epsilon_\pi \defeq 2 \dtv( \nu^{\pi}_{T}, \nu^{\pi_D}_{T})$. Under the assumption of Lemma \ref{lemma}, the expected return $\eta[\pi]$ admits the following bound:
		\begin{equation}	
		\eta[\pi]  \geq \hat{\eta}[\pi] - R \cdot \epsilon_\pi - \gamma R \cdot d_{\FF}(\rho_{T}^{\pi_D}, \rho_{\hat{T}}^{\pi}) \cdot \text{Vol}(\sspace) -\gamma R\cdot \mathbb{E}_{(s,a) \sim \rho_{T}^{\pi_D}}\sqrt{2 D_{\text{KL}}(T(\cdot|s,a)~\|~\hat{T}(\cdot|s,a))},
		\end{equation}
	where $\text{Vol}(\sspace)$ is the volume of state space $\sspace$.
\end{restatable}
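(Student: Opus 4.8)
The plan is to bound $|\eta[\pi] - \hat\eta[\pi]|$ by writing both returns as integrals against state-action occupancy measures and controlling the difference through the state visit distributions, then invoking Lemma~\ref{lemma} pointwise in $s'$. First I would write $\eta[\pi] = \mathbb{E}_{\rho_T^\pi}[r(s,a)]$ and $\hat\eta[\pi] = \mathbb{E}_{\rho_{\hat T}^\pi}[r(s,a)]$, and insert the intermediate quantity $\mathbb{E}_{\rho_T^{\pi_D}, \text{state-marginal}}[\dots]$ — more precisely, I would compare $\eta[\pi]$ first against the return one gets by running $\pi$ but with states distributed as $\nu_T^{\pi_D}$, and then compare that against $\hat\eta[\pi]$. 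The first comparison is a pure policy-shift term: since $r \le R$ and $|\int (\nu_T^\pi(s) - \nu_T^{\pi_D}(s)) \,(\text{something bounded by }R)\,\dif s| \le R \cdot 2\dtv(\nu_T^\pi, \nu_T^{\pi_D}) = R\,\epsilon_\pi$, this accounts for the $R\cdot\epsilon_\pi$ term. (One must be slightly careful that conditioning on $s$ and then taking $a\sim\pi(\cdot\mid s)$ and the reward expectation is the same bounded-by-$R$ function on both sides, so only the state marginals differ.)

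Next I would handle the remaining piece, the difference between the return of $\pi$ under state distribution $\nu_T^{\pi_D}$ versus $\nu_{\hat T}^\pi$. Again bounding $r$ by $R$ and pulling the action/reward expectation into a function bounded by $R$, this is at most $R \int |\nu_T^{\pi_D}(s') - \nu_{\hat T}^\pi(s')| \,\dif s'$. Now apply Lemma~\ref{lemma} inside the integral: $|\nu_T^{\pi_D}(s') - \nu_{\hat T}^\pi(s')| \le \gamma\, d_{\FF_{s'}}(\rho_T^{\pi_D}, \rho_{\hat T}^\pi) + \gamma\, \mathbb{E}_{(s,a)\sim\rho_T^{\pi_D}}|T(s'\mid s,a) - \hat T(s'\mid s,a)|$. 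For the first summand I would bound $d_{\FF_{s'}} \le d_{\FF}$ (since $\FF_{s'}\subseteq\FF = \cup_{s'}\FF_{s'}$, the sup defining the IPM only grows), so integrating over $s'\in\sspace$ contributes $\gamma R\, d_\FF(\rho_T^{\pi_D},\rho_{\hat T}^\pi)\,\text{Vol}(\sspace)$. For the second summand I would swap the order of integration (Fubini, valid since everything is nonnegative), giving $\gamma R\, \mathbb{E}_{(s,a)\sim\rho_T^{\pi_D}} \int |T(s'\mid s,a) - \hat T(s'\mid s,a)| \,\dif s' = \gamma R\, \mathbb{E}_{(s,a)\sim\rho_T^{\pi_D}} \, 2\dtv(T(\cdot\mid s,a), \hat T(\cdot\mid s,a))$, and then apply Pinsker's inequality, $2\dtv(P,Q) \le \sqrt{2 D_{\text{KL}}(P\,\|\,Q)}$, to convert the $L^1$/TV term into the stated KL form. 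Assembling the three contributions and using $\eta[\pi] \ge \hat\eta[\pi] - |\eta[\pi] - \hat\eta[\pi]|$ yields the claimed bound.

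The main obstacle I anticipate is the bookkeeping in the decomposition of $\eta[\pi] - \hat\eta[\pi]$ into the "policy shift" part and the "model/occupancy" part so that each comparison genuinely holds with the same bounded-by-$R$ integrand on both sides; in particular one must verify that running $\pi$ against two different state marginals really does reduce to integrating $R$ against the signed measure $\nu_T^\pi - \nu_T^{\pi_D}$ (resp.\ $\nu_T^{\pi_D} - \nu_{\hat T}^\pi$) rather than picking up cross terms, and that the $\dtv$ normalization matches the factor of $2$ appearing in $\epsilon_\pi$ and in Pinsker. A secondary subtlety is the interchange of $\sup_{f}$ (inside $d_{\FF_{s'}}$) with integration over $s'$: we do not need equality, only the inequality $d_{\FF_{s'}} \le d_\FF$, which is immediate, so this is harmless. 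Everything else — Pinsker, Fubini for nonnegative integrands, bounding rewards by $R$ — is routine.
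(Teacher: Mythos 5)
Your proposal is correct and follows essentially the same route as the paper's proof: bound $|\eta[\pi]-\hat\eta[\pi]|$ by $R\int|\nu^\pi_T-\nu^\pi_{\hat T}|$ after marginalizing out the shared policy $\pi(a\mid s)$, split via the triangle inequality through $\nu^{\pi_D}_T$ to produce the $R\,\epsilon_\pi$ term, then apply Lemma~\ref{lemma} pointwise in $s'$, integrate (using $d_{\FF_{s'}}\le d_\FF$ and Fubini), and finish with the total-variation identity and Pinsker's inequality. All the bookkeeping points you flag (the factor of $2$ in $\epsilon_\pi$ and in Pinsker, and the harmlessness of bounding the per-state IPM by $d_\FF$ before integrating) are handled exactly as in the paper.
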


\textbf{Remark}~~Theorem \ref{theorem} gives a lower bound on the objective in the true environment. 
In this bound, the last term corresponds to the model estimation error on real data, since the Kullback–Leibler divergence measures the average quality of current model estimation. The second term denotes the divergence between state visit distributions induced by the policy $\pi$ and the behavioral policy $\pi_D$ in the environment, which is an important objective in batch reinforcement learning \cite{fujimoto2018off} for reliable exploitation of off-policy samples. The third term is the integral probability metric between the $(s,a)$ distributions $\rho_{T}^{\pi_D}$ and $\rho_{\hat{T}}^{\pi}$, which exactly corresponds to the distribution mismatch problem between model learning and model usage.

We would like to maximize the lower bound in Theorem \ref{theorem} jointly over the policy and the dynamics model. In practice, we usually omit model optimization in the first term $\hat{\eta}[\pi]$ for simplicity like in previous work \cite{slbo}. Then optimizing the first term only over the policy and the last term over the model together becomes the standard principle of Dyna-style MBRL approaches. And RL usually encourages the agent to explore, so we won't constrain the policy according to the second term since it violates the rule of exploration, which aims at seeking out novel states.
Then the key is to minimize the third term, \ie, the occupancy measure divergence, which is intuitively reasonable since the dynamics model will predict simulated $(s,a)$ samples close to its training data with high accuracy. To optimize this term over the policy, we can use imitation learning methods on the dynamics model, such as GAIL, \cite{gail} where the real samples are viewed as expert demonstrations. However, optimizing this term over the policy is unnecessary, which may further reduce the efficiency of the whole training process. For example, one does not need to further optimize the policy using this term but just uses the $\hat{\eta}[\pi]$ term when the model is sufficiently accurate. So in this paper, we mainly focus on how to optimize this occupancy measure matching term over the model.

\section{AMPO Framework}

\begin{algorithm}[t]
			\caption{AMPO}
			\label{alg}
			\begin{algorithmic}[1]
				\STATE Initialize policy $\pi_\phi$, dynamics model $\hat{T}_\theta$, environment buffer $\mathcal{D}_e$, model buffer $\mathcal{D}_m$
				\REPEAT
				\STATE Take an action in the environment using the policy $\pi_\phi$; add the sample$(s,a,s',r) $ to $\mathcal{D}_e$
				\IF{every $E$ real timesteps are finished}
				\STATE Perform $G_1$ gradient steps to train the model $\hat{T}_\theta$ with samples from $\mathcal{D}_e$
				\FOR{$F$ model rollouts}
				\STATE Sample a state $s$ uniformly from $\mathcal{D}_e$
				\STATE Use policy $\pi_\phi$ to perform a $k$-step model rollout starting from $s$; add to $\mathcal{D}_m$
				\ENDFOR
				\STATE Perform $G_2$ gradient steps to train the feature extractor with samples $(s,a)$ from both $\mathcal{D}_e$ and $\mathcal{D}_m$ by the model adaptation loss $\mathcal{L}_{\text{WD}}$
				\ENDIF
				\STATE Perform $G_3$ gradient steps to train the policy $\pi_\phi$ with samples $(s,a,s',r)$ from $\mathcal{D}_e\cup\mathcal{D}_m$
				\UNTIL{certain number of real samples}
			\end{algorithmic}
\end{algorithm}
\vspace{-10pt}

To optimize the occupancy measure matching term over the model, instead of alleviating the distribution mismatch problem on data level, we tackle it explicitly on feature level from the perspective of unsupervised domain adaptation \cite{ben2010theory, zhao2019learning}, which aims at generalizing a learner on unlabeled data with labeled data from a different distribution. One promising solution for domain adaptation is to find invariant feature representations by incorporating an additional objective of feature distribution alignment \cite{ben2007analysis, ganin2016domain}. Inspired by this, we propose to introduce a model adaptation procedure to encourage the dynamics model to learn the features that are invariant to the real state-action data and the simulated one.

Model adaptation can be seamlessly incorporated into existing Dyna-style MBRL methods since it is orthogonal to them, including those by reducing the distribution mismatch problem. In this paper, we adopt MBPO \cite{mbpo} as our baseline backbone framework due to its remarkable success in practice. We dub the integrated framework AMPO and detail the algorithm in Algorithm~\ref{alg}.

\subsection{Preliminary: MBPO Algorithm}
\paragraph{Model Learning}
We use a bootstrapped ensemble of probabilistic dynamics models $\{\hat{T}_\theta^1, ..., \hat{T}_\theta^B \}$ to capture model uncertainty, which was first introduced in \cite{pets} and has shown to be effective in model learning \cite{mbpo, wang2019exploring}. Here $B$ is the ensemble size and $\theta$ denotes the parameters used in the model ensemble. To be specific, each individual dynamics model $\hat{T}_\theta^i$ is a probabilistic neural network which outputs a Gaussian distribution with diagonal covariance conditioned on the state $s_n$ and the action $a_n$:
$ \hat{T}_\theta^i(s_{n+1}\mid s_n, a_n) = \mathcal{N}(\mu_\theta^i(s_n,a_n),\Sigma_\theta^i(s_n,a_n)). $
The neural network models in the ensemble are initialized differently and trained with different bootstrapped samples selected from the environment buffer $\mathcal{D}_e$, which stores the real data collected from the environment. To train each single model, the negative log-likelihood loss is used: 
\begin{equation}
\label{eq:model-loss}
\mathcal{L}_{\hat{T}}^i(\theta)=~\sum_{n=1}^{N}\left[\mu_{\theta}^i\left({s}_{n}, {a}_{n}\right)-{s}_{n+1}\right]^{\top} {{\Sigma}_{\theta}^i}^{-1}\left({s}_{n}, {a}_{n}\right) \left[\mu_{\theta}^i\left({s}_{n}, {a}_{n}\right)-{s}_{n+1}\right]+\log \operatorname{det} {\Sigma}_{\theta}^i\left({s}_{n}, {a}_{n}\right).
\end{equation}

\paragraph{Model Usage}
The ensemble models are used to generate $k$-length simulated rollouts branched from the states sampled from the environment buffer $\mathcal{D}_e$. In detail, at each step, a model from the ensemble is selected at random to predict the next state and then the simulated data is added to the model buffer $\mathcal{D}_m$. Then a policy is trained on both real and simulated data from two buffers with a certain ratio. We use soft actor-critic (SAC) \cite{sac} as the policy optimization algorithm, which trains a stochastic policy with entropy regularization in actor-critic architecture by minimizing the expected KL-divergence:
\begin{equation}
\mathcal{L}_\pi(\phi) = \mathbb{E}_s[D_{\text{KL}}(\pi_\phi(\cdot|s) ~\|~ \exp(Q(s_t,\cdot)-V(s))].
\end{equation}

\begin{figure}[t]
	\includegraphics[width=\columnwidth]{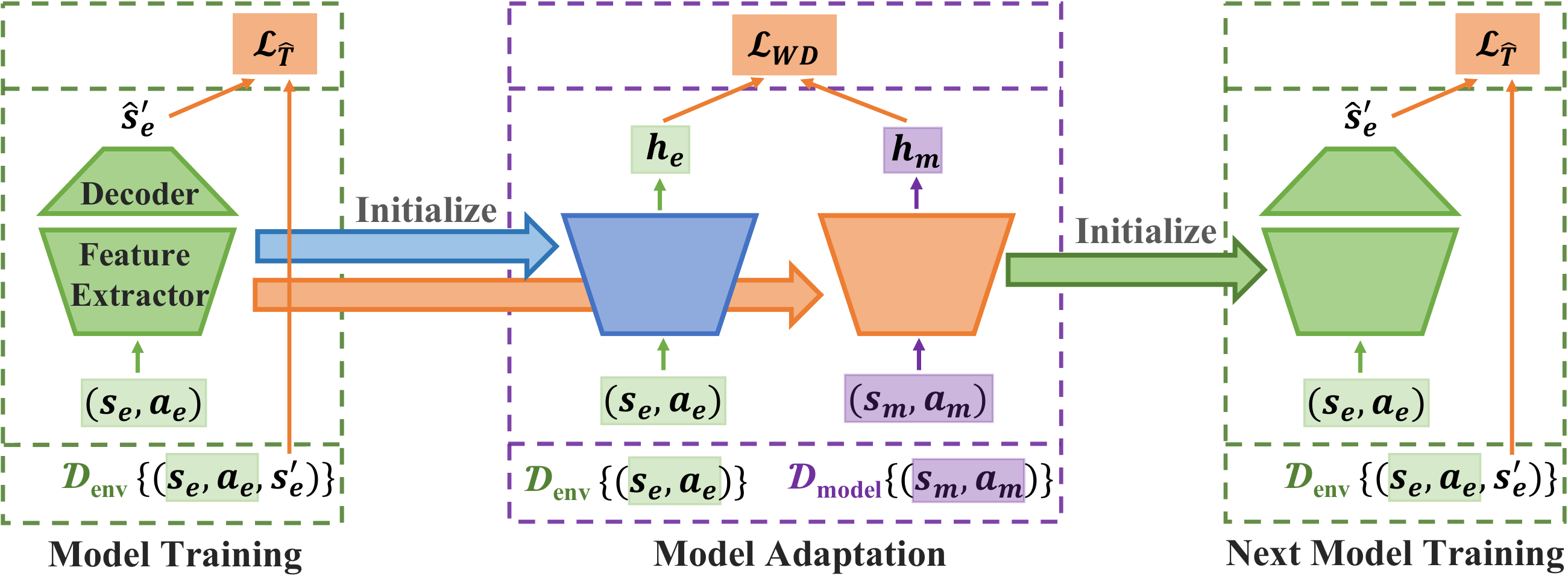}
	\caption{Illustration of model training and model adaptation. At every iteration, the model is learned by maximum likelihood estimation with real data collected from the environment. After the model training, the feature extractor is copied, and then the model adaptation begins where the two separate feature extractors are used for real data and simulated data respectively. After the model adaptation at the current iteration is finished, the feature extractor for the simulated data will be used to initialize the model training at the next iteration.}
	\label{fig:train-adapt}
\end{figure}

\subsection{Incorporating Unsupervised Model Adaptation}
\label{sec:model-adaptation}

For convenience, in the following, we only consider one individual dynamics model, and the same procedure could be applied to any other dynamics model in the ensemble.
Since the model is implemented by a neural network, we define the first several layers as the feature extractor $f_g$ with corresponding parameters $\theta_g$ and the remaining layers as the decoder $f_d$ with parameters $\theta_d$. Thus we have $\hat{T}=f_d\circ f_g$ and $\theta=\{\theta_g, \theta_d \}$. We propose to add a model adaptation loss over the output of feature extractor, which encourages such a conceptual division as the feature encoder and the decoder.
The main idea of model adaptation is to adjust the feature extractor $f_g$ in order to align the two feature distributions of real samples and simulated ones as input, so that the induced feature distributions from real and simulated samples are close in the feature space.

To incorporate unsupervised model adaptation into MBPO, we adopt alternative optimization between model training and model adaptation as illustrated in Figure \ref{fig:train-adapt}.  At every iteration (line 4 to 11 in Algorithm~\ref{alg}), when the dynamics model is trained, we use it to generate simulated rollouts which will then be used for model adaptation and policy optimization. 
As for the detailed adaptation strategy, instead of directly sharing the parameter weights of the feature extractor between real data and simulated data \cite{ganin2016domain}, we choose to adopt the asymmetric feature mapping strategy \cite{adda}, which has been shown to outperform the weight-sharing variant in domain adaptation due to more flexible feature mappings. To be specific, the asymmetric feature mapping strategy unties the shared weights between two domains and learns individual feature extractors for real data and simulated data respectively. Thus in AMPO, after the model adaptation at one iteration is finished, we will use the weight parameters for simulated data to initialize the model training for the next iteration. Through such an alternative optimization between model training and model adaptation, the feature representations learned by the feature extractor will be informative for the decoder to predict real samples, and more importantly it can generalize to the simulated samples.

\subsection{Model Adaptation via Wasserstein-1 Distance}

Specifically, given real samples $(s_e, a_e)$ from the environment buffer $\mathcal{D}_e$ and the simulated samples $(s_m, a_m)$ from the model buffer $\mathcal{D}_m$, the two separate feature extractors map them to feature representations $h_e=f_g^e(s_e,a_e)$ and $h_m=f_g^m(s_m,a_m)$. To achieve model adaptation, we minimize one kind of IPM between the two feature distributions $\mathbb{P}_{h_e}$ and $\mathbb{P}_{h_m}$ according to the lower bound in Theorem \ref{theorem}. In this paper, we choose Wasserstein-1 distance as the divergence measure in model adaptation, which is validated to be effective in domain adaptation \cite{shen2017wasserstein}. In the appendix, we also provide a variant that uses Maximum Mean Discrepancy.

Wasserstein-1 distance corresponds to IPM where the witness function satisfies the 1-Lipschitz constraint.
To estimate the Wasserstein-1 distance, we use a critic network $f_c$ with parameters $\omega$ as introduced in Wasserstein GAN \cite{wgan}. The critic maps a feature representation to a real number, and then according to Eq.~\eqref{eq:ipm} the Wasserstein-1 distance can be estimated by maximizing the following objective function over the critic:
\begin{equation}
	\mathcal{L}_{\text{WD}}(\theta_g^e, \theta_g^m, \omega) = \frac{1}{N_e} \sum_{i=1}^{N_e} f_c(h_e^i) - \frac{1}{N_m} \sum_{j=1}^{N_m} f_c(h_m^j).
\end{equation}
In the meanwhile, the parameterized family of critic functions $\{f_c\}$ should satisfy 1-Lipschitz constraint according to the IPM formulation of Wasserstein-1 distance. In order to properly enforce 1-Lipschitz, we choose the gradient penalty loss \cite{wgan-gp} for the critic
\begin{eqnarray}
\mathcal{L}_{gp} (\omega) = \mathbb{E}_{\mathbb{P}_{\hat{h}}}[(\lVert \nabla f_c(\hat{h}) \rVert_2 - 1)^2 ] ,
\end{eqnarray}
where $\mathbb{P}_{\hat{h}}$ is the distribution of uniformly distributed linear interpolations of $\mathbb{P}_{h_e}$ and $\mathbb{P}_{h_m}$. 

After the critic is trained to approximate the Wasserstein-1 distance, we optimize the feature extractor to minimize the estimated Wasserstein-1 distance to learn features invariant to the real data and simulated data. To sum up, model adaptation though Wasserstein-1 distance can be achieved by solving the following minimax objective
\begin{equation}
\label{obj:wd-loss}
\min_{\theta_g^e, \theta_g^m}\max_{\omega}\quad \mathcal{L}_{\text{WD}}(\theta_g^e, \theta_g^m, \omega) - \alpha\cdot\mathcal{L}_{gp}(\omega),
\end{equation}
where $\theta_g^e$ and $\theta_g^m$ are the parameters of the two feature generators for real data and simulated data respectively, and $\alpha$ is the balancing coefficient. For model adaptation at each iteration, we alternate between training the critic to estimate the Wasserstein-1 distance and training the feature extractor of the dynamics model to learn transferable features.

\begin{figure*}[!t]
	\centering
	\includegraphics[width=\textwidth]{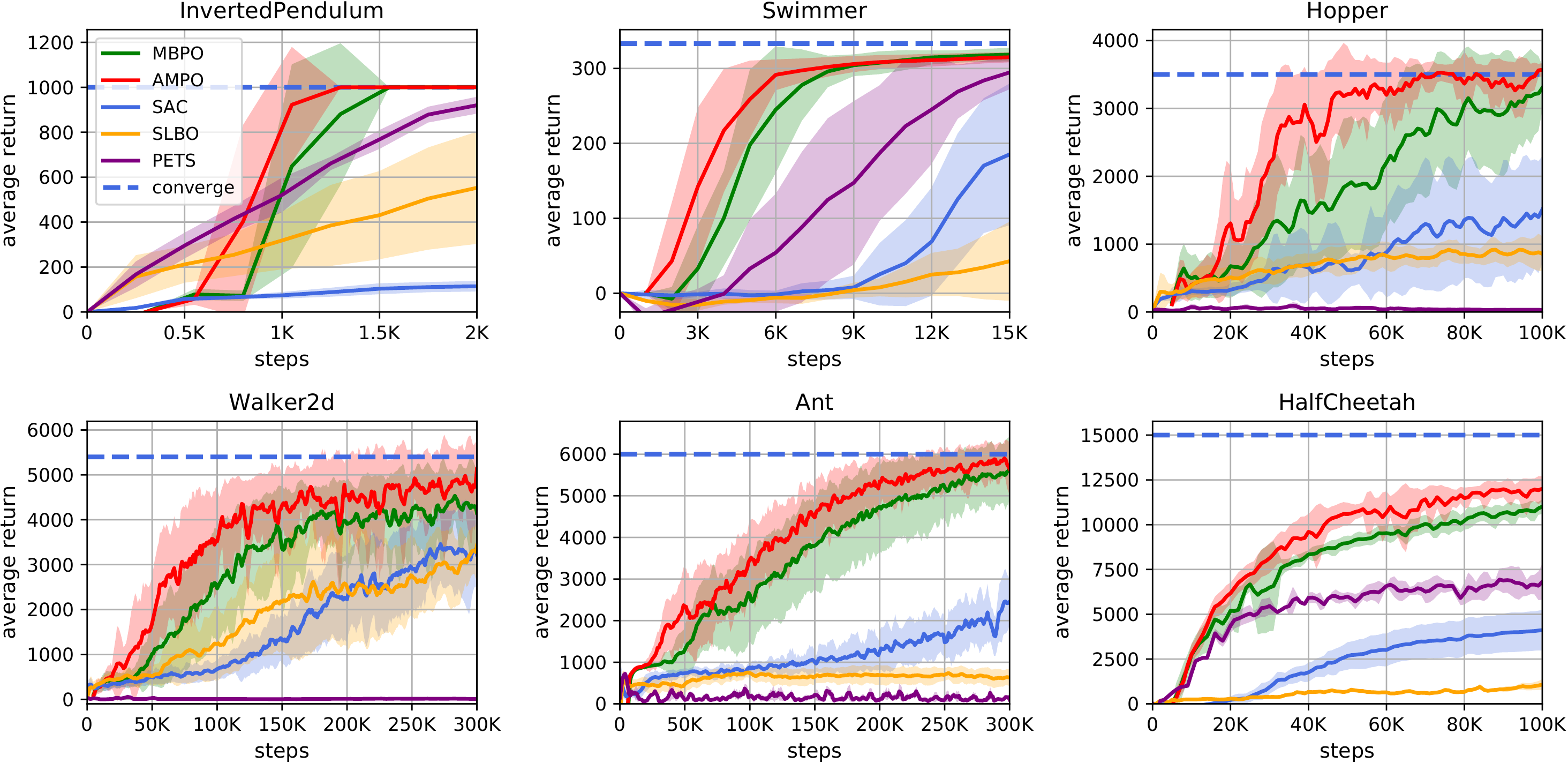}
	\caption{Performance curves of AMPO and other model-based and model-free baselines on six continuous control benchmarking environments. We average the results over five random seeds, where solid curves depict the mean of five trials and shaded areas indicate the standard deviation. The dashed reference lines are the asymptotic performance of Soft Actor-Critic (SAC).} 
	\label{fig:result}
\end{figure*}

\section{Experiments}
\subsection{Comparative Evaluation} 
\textbf{Compared Methods}~~
We compare our method AMPO to other model-free and model-based algorithms. Soft Actor-Critic (SAC) \cite{sac} is the state-of-the-art model-free off-policy algorithm in terms of sample efficiency and asymptotic performance so we choose SAC for the model-free baseline. For model-based methods, we compare to MBPO \cite{mbpo}, PETS \cite{pets} and SLBO \cite{slbo}.

\textbf{Environments}~~
We evaluate AMPO and other baselines on six MuJoCo continuous control tasks with a maximum horizon of 1000 from OpenAI Gym~\cite{brockman2016openai}, including InvertedPendulum, Swimmer, Hopper, Walker2d, Ant and HalfCheetah. For the Swimmer environment, we use the modified version introduced by \cite{mbbl} since the original version is quite difficult to solve. For the other five environments, we adopt the same settings as in \cite{mbpo}.

\textbf{Implementation Details}~~
We implement all our experiments using TensorFlow.\footnote{Our code is publicly available at: \url{https://github.com/RockySJ/ampo}} 
For MBPO and AMPO, we first apply a random policy to sample a certain number of real data and use them to pre-train the dynamics model. In AMPO, the model adaptation procedure will not be executed any more after a certain number of real samples, which doesn't affect performance.
In each adaptation iteration, we train the critic for five steps and then train the feature extractor for one step, and the coefficient $\alpha$ of gradient penalty is set to 10. Every time we train the dynamics model, we randomly sample several real data as a validation set and stop the model training if the model loss does not decrease for five gradient steps, which means we do not choose a specific value for the hyperparameter $G_1$.
Other important hyperparameters are chosen by grid search and detailed hyperparameter settings used in AMPO can be found in the appendix.

\textbf{Results}~~
The learning curves of all compared methods are presented in Figure \ref{fig:result}. From the comparison, we observe that our approach AMPO is the most sample efficient as they learn faster than all other baselines in all six environments. Furthermore, AMPO is capable of reaching comparable asymptotic performance of the state-of-the-art model-free baseline SAC. Compared with MBPO, our approach achieves better performance in all the environments, which verifies the value of model adaptation. This also indicates that even in the situation with reduced distribution mismatch by using short rollouts, model adaptation still helps.

\subsection{Model Errors}
\label{sec:model-loss}

To better understand how model adaptation affects model learning, we plot in Figure \ref{fig:model-loss} the curves of one-step model losses in two environments. By comparison, we observe that both the training and validation losses of dynamics models in AMPO are smaller than that in MBPO throughout the learning process. It shows that by incorporating model adaptation the learned model becomes more accurate. Consequently, the policy optimized based on the improved dynamics model can perform better.

We also investigate the compounding model errors of multi-step forward predictions, which is largely caused by the distribution mismatch problem. The $h$-step compounding error \cite{nn} is calculated as $\epsilon_h=\frac{1}{h}\sum_{i=1}^h \left\| \hat{s}_i-s_i\right\|^2$ where $\hat{s}_{i+1}=\hat{T}_\theta(\hat{s}_i, a_i)$ and $\hat{s}_0 = s_0$. From Figure~\ref{fig:compounding-error} we observe that AMPO achieves smaller compounding errors than MBPO, which verifies that AMPO can successfully mitigate the distribution mismatch.

\begin{figure*}[t!]
	\centering
	\subfigure[One-step model losses.]{
		\includegraphics[width=0.652\textwidth]{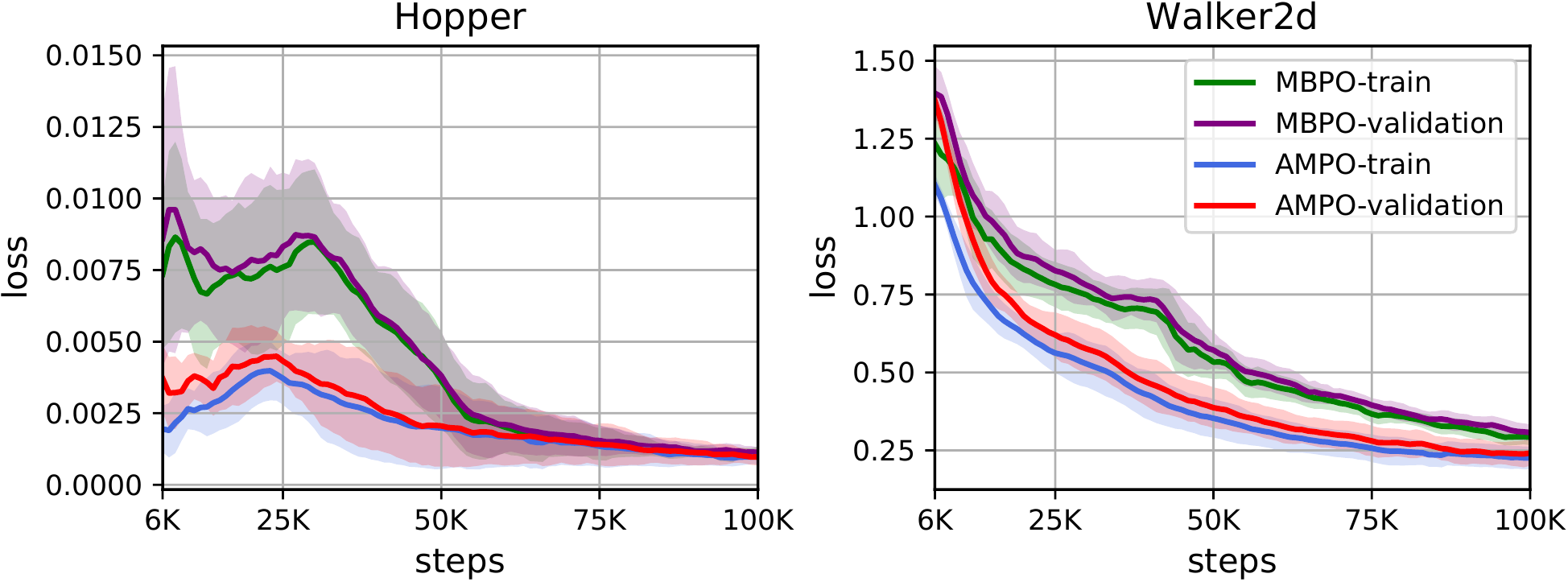}
		\label{fig:model-loss}
	}
	\subfigure[Compounding errors.]{
		\includegraphics[width=0.308\textwidth]{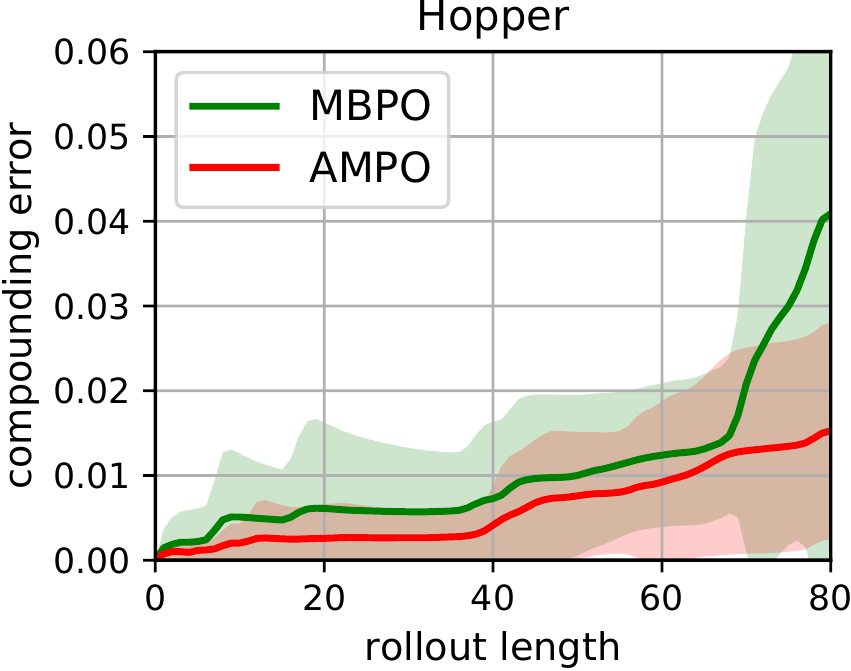}
		\label{fig:compounding-error}
	}
	\caption{(a) The one-step model losses are evaluated on the training and (varying) validation data set from the environment buffer every time the model is trained. (b) Every 5000 environment steps in Hopper, we calculate multi-step compounding errors and then average them.}
\end{figure*}

\subsection{Wasserstein-1 Distance Visualization}

To further investigate the effect of model adaptation, we visualize the estimated Wasserstein-1 distance between the real features and simulated ones. Besides MBPO and AMPO, we additionally analyze the multi-step training loss of SLBO since it also uses the model output as the input of model training, which may help learn invariant features. According to the results shown in Figure~\ref{fig:wasserstein}, we find that: i) the vanilla model training in MBPO itself can slowly minimize the Wasserstein-1 distance between feature distributions; ii) the multi-step training loss in SLBO does help learn invariant features but the improvement is limited; iii) the model adaptation loss in AMPO is effective in promoting feature distribution alignment, which is consistent with our initial motivation.

\subsection{Hyperparameter Studies}

In this section, we study the sensitivity of AMPO to important hyperparameters, and the results in Hopper are shown in Figure~\ref{fig:hyperparameter}. We first conduct experiments with different adaptation iterations $G_2$. We observe that increasing $G_2$ yields better performance up to a certain level while too large $G_2$ degrades the performance, which means that we need to control the trade-off between model training and model adaptation to ensure the representations to be invariant and also discriminative. 
We then conduct experiments with different rollout length schedules, of which the effectiveness has been shown in MBPO \cite{mbpo}. We observe that generating longer rollouts earlier in AMPO improves the performance while it degrades the performance of MBPO a little. It is easy to understand since as discussed in Section~\ref{sec:model-loss} the learned dynamics model in AMPO obtains better accuracy in approximations and therefore longer rollouts can be performed. 

\section{Related Work}
The two important issues in MBRL methods are model learning and model usage. Model learning mainly involves two aspects: (1) function approximator choice like Gaussian process \cite{deisenroth2011pilco}, time-varying linear models \cite{levine2016end} and neural networks \cite{nn}, and (2) objective design like multi-step L2-norm \cite{slbo}, log loss \cite{pets} and adversarial loss \cite{mi}. Model usage can be roughly categorized into four groups: (1) improving policies using model-free algorithms like Dyna \cite{dyna, slbo, clavera2018model, mbpo}, (2) using model rollouts to improve target value estimates for temporal difference (TD) learning \cite{feinberg2018model, steve}, (3) searching policies with back-propagation through time by exploiting the model derivatives \cite{deisenroth2011pilco, levine2016end}, and (4) planning by model predictive control (MPC) \cite{nn, pets} without explicit policy. The proposed AMPO framework with model adaptation can be viewed as an innovation in model learning by additionally adopting an adaptation loss function. 
\begin{figure*}[t!]
	\centering
	\subfigure[Wasserstein distance.]{
		\includegraphics[width=0.311\textwidth]{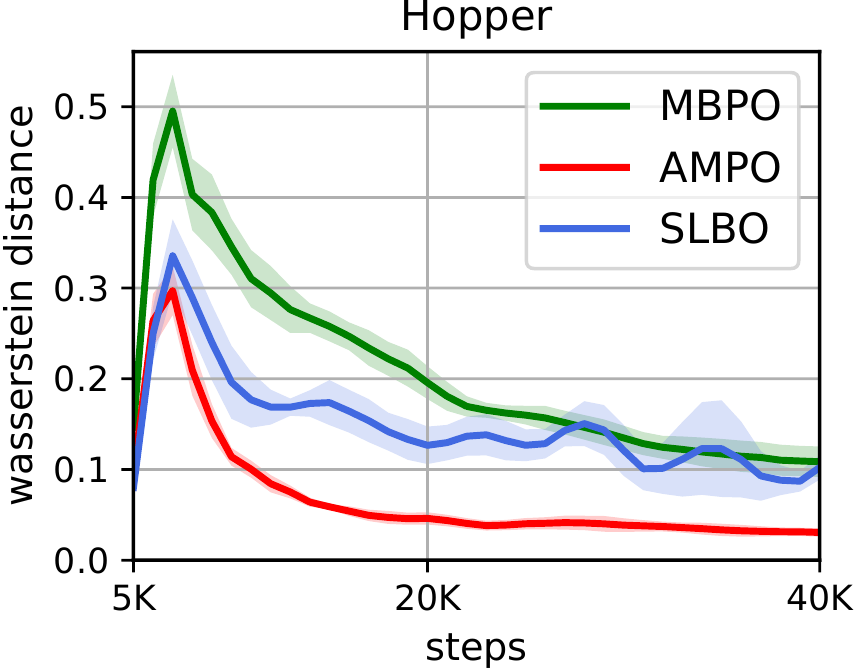}
		\label{fig:wasserstein}
	}
	\subfigure[Hyperparameter studies.]{
		\includegraphics[width=0.649\textwidth]{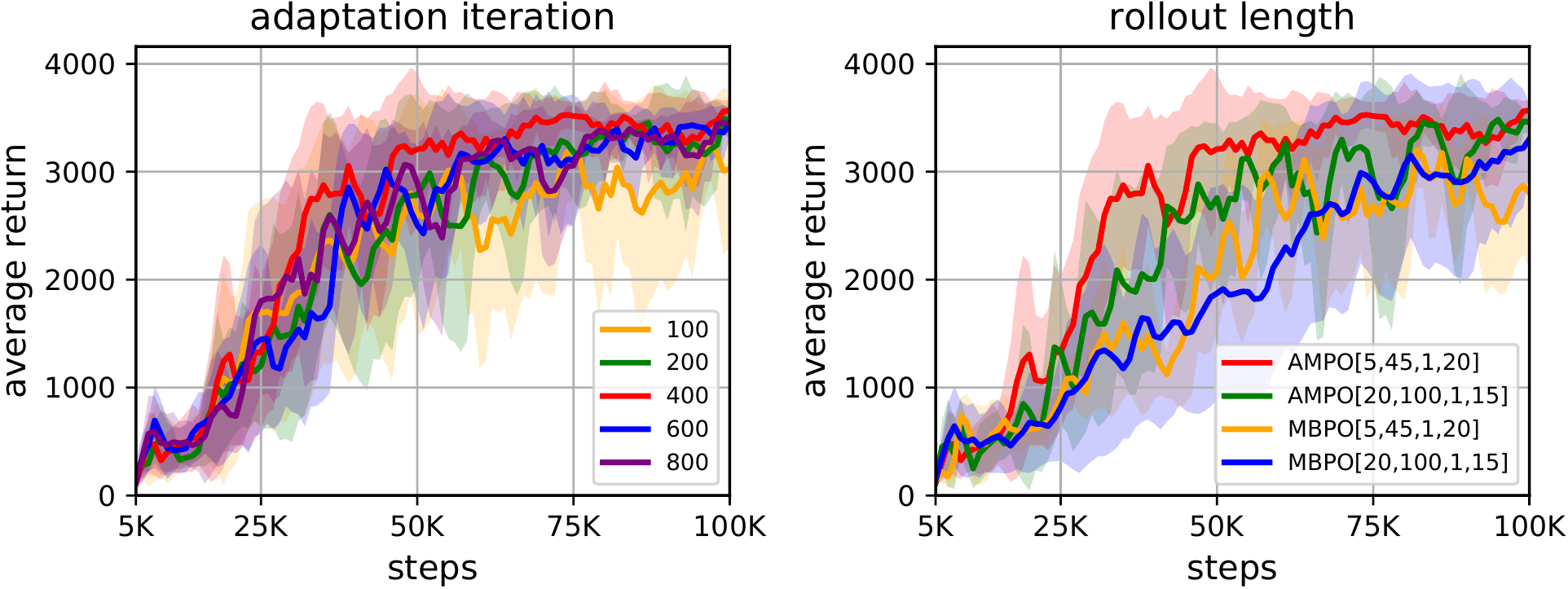}
		\label{fig:hyperparameter}
	}
	\caption{(a) We visualize the Wasserstein-1 distance between the feature distributions. (b) We study the effect of the number of adaptation iterations and the rollout length in AMPO. $[a,b,x,y]$ means the rollout length linearly increases from $x$ to $y$ at the epochs between $a$ and $b$. The rollout schedule [20,100,1,15] is the value used in MBPO and [5,45,1,20] is the schedule we choose for AMPO.}
\end{figure*}

In this paper, we mainly focus on the distribution mismatch problem in deep MBRL \cite{talvitie2014model}, \textit{i.e.}, the state-action occupancy measure used for model learning mismatches the one generated for model usage. 
Several previous methods have been proposed to reduce the distribution mismatch problem. Firstly, it can be reduced by improving model learning, such as using probabilistic model ensembles \cite{pets}, designing multi-step models \cite{asadi2019combating} and adopting a generative adversarial imitation objective \cite{mi}. Secondly, it can be reduced by designing delicate schemes in model usage, such as using short model-generated rollouts \cite{mbpo} and interpolating between rollouts of various lengths \cite{steve}. Although these existing methods help alleviate the distribution mismatch, it has not been solved explicitly. On the other hand, the multi-step training loss in SLBO \cite{slbo} and the \textit{self-correct} mechanism \cite{talvitie2014model, self-correct} can solve this problem. They may also help learn invariant features since the model predicted states were used as the input to train the model in addition to the real data. By comparison, model adaptation directly enforces the distribution alignment constraint to mitigate the problem, and the decoder in AMPO is only trained by real data to guarantee it is unbiased.

Previous theoretical works on MBRL mostly focused on either the tabular MDPs or linear dynamics \cite{szita2010model, jaksch2010near, dean2017sample, simchowitz2018learning}, but not much in the context of continuous state space and non-linear systems. Recently, \cite{slbo} gave theoretical guarantee of monotonic improvement by introducing a reference policy and imposing constraints on policy optimization and model learning related to the reference policy. Then \cite{mbpo} also derived a lower bound focusing on branched short rollouts and the algorithm was designed intuitively instead of maximizing the lower bound.

\section{Conclusion}

In this paper, we investigate how to explicitly tackle the distribution mismatch problem in MBRL. We first provide a lower bound to justify the necessity of model adaptation to correct the potential distribution bias in MBRL. We then propose to incorporate unsupervised model adaptation with the intention of aligning the latent feature distributions of real data and simulated data. In this way, the model gives more accurate predictions when generating simulated data, and therefore the follow-up policy optimization performance can be improved. Extensive experiments on continuous control tasks have shown the effectiveness of our work. As a future direction, we plan to integrate additional domain adaptation techniques to further promote distribution alignment. We believe our work takes an important step towards more sample-efficient MBRL. 

\section*{Broader Impact}

The proposed model adaptation can be incorporated into existing Dyna-style model-based methods, such as MBPO in this paper, to further improve the sample efficiency. This improvement will ease the application of MBRL in practical decision-making problems like robotic control in the future. Despite the potential positive impacts of model adaptation, we should also notice some negative issues. It will cost more to tune real-world MBRL systems with model adaptation to avoid too strong or too weak adaptation, which is usually related to specific environments. We hope our work can provide insights for future improvements in tackling the distribution mismatch problem in MBRL.

\section*{Acknowledgments}
The corresponding author Weinan Zhang is supported by "New Generation of AI 2030" Major Project (2018AAA0100900) and National Natural Science Foundation of China (61702327, 61772333, 61632017, 81771937).

\bibliography{reference}
\bibliographystyle{apalike}

\newpage
\appendix

\onecolumn

\section*{\centering Appendix for: Model-based Policy Optimization with \\ Unsupervised Model Adaptation}
\section{Omitted Proofs}
\decomposition*

\begin{proof}
For the state visit distribution $\nu^\pi_{\hat{T}}(s)$, we have 

\begin{equation}
    \nu^\pi_{\hat{T}}(s') = (1-\gamma) \nu_0(s') + \gamma \int \rho^\pi_{\hat{T}}(s,a) {\hat{T}}(s'|s,a) \dif s \dif a 
\end{equation}
where $\nu_0$ denotes the probability of the initial state being the state $s'$. Then we have 
\begin{equation}
	\begin{aligned}
	\label{eq: sd}
	& \quad\,\, |\nu^{\pi_D}_{T}(s')  - \nu^\pi_{\hat{T}}(s')| \\
	&= \gamma \left|\int_{s,a} T(s'|s,a) \rho_{T}^{\pi_D}(s,a) - \hat{T}(s'|s,a) \rho_{\hat{T}}^{\pi}(s,a) \dif s \dif a \right| \\
	&= \gamma \left| \mathbb{E}_{(s,a) \sim \rho_{T}^{\pi_D}}[T(s'|s,a)] - \mathbb{E}_{(s,a) \sim \rho_{\hat{T}}^\pi}[\hat{T}(s'|s,a)]\right| \\
	&\leq \gamma \left|\mathbb{E}_{(s,a) \sim \rho_{T}^{\pi_D}} [T(s'|s,a) -\hat{T}(s'|s,a)]\right| + \gamma \left| \mathbb{E}_{(s,a) \sim \rho_{T}^{\pi_D}}[\hat{T}(s'|s,a)] - \mathbb{E}_{(s,a) \sim \rho_{\hat{T}}^{\pi}}[\hat{T}(s'|s,a)]\right|\\
	&\leq \gamma \mathbb{E}_{(s,a) \sim \rho_{T}^{\pi_D}} \left|T(s'|s,a) -\hat{T}(s'|s,a)\right| + \gamma d_{\mathcal{F}_{s'}}(\rho_{T}^{\pi_D}, \rho_{\hat{T}}^{\pi}),
	\end{aligned}
	\end{equation}
which completes the proof.
\end{proof}

\main*
\begin{proof}
The return discrepancy is bounded as follows
	\begin{equation}
	\begin{aligned}
	\label{eq: bound}
	\quad\,\, |\eta(\pi) - \hat{\eta}(\pi)|  & = \left| \int_{s,a} \left(\rho^\pi_{T}(s,a) - \rho^\pi_{\hat{T}}(s,a) \right) r(s,a)  \dif s  \dif a \right| \\
	& = \left| \int_{s,a} \left(\nu^\pi_{T}(s)  \pi(a|s) - \nu^\pi_{\hat{T}}(s)  \pi(a|s) \right) r(s,a)  \dif s  \dif a \right| \\
	& \leq R \cdot  \int_{s,a} \left|\nu^\pi_{T}(s)  \pi(a|s) - \nu^\pi_{\hat{T}}(s)  \pi(a|s) \right|   \dif s  \dif a  \\
	& = R \cdot \int_{s} \left| \nu^\pi_{T}(s)  - \nu^\pi_{\hat{T}}(s) \right| \dif s  \\
	& = R \cdot \int_{s} \left|\nu^{\pi_D}_{T}(s)  - \nu^\pi_{\hat{T}}(s) + \nu^{\pi}_{T}(s)  - \nu^{\pi_D}_{T}(s) \right|  \dif s  \\
	& \leq R \cdot \int_{s} \left| \nu^{\pi_D}_{T}(s)  - \nu^\pi_{\hat{T}}(s) \right|  \dif s  + R \cdot \epsilon_\pi \\
	\end{aligned}
	\end{equation}
Replacing the above state $s$ with the notation $s'$, then according to Lemma~\ref{lemma}, we have
\begin{equation}
\begin{aligned}
&\quad\,\, |\eta(\pi) - \hat{\eta}(\pi)| \\ & \leq R \cdot \epsilon_\pi + \gamma R \cdot \mathbb{E}_{(s,a) \sim \rho_{T}^{\pi_D}} \int_{s'} \left| T(s'|s,a) -\hat{T}(s'|s,a) \right| \dif s' + \gamma R \cdot \int_{s'} d_{\mathcal{F}_{s'}}(\nu_{T}^{\pi_D}, \nu_{\hat{T}}^{\pi})\dif s'  \\
& \leq R \cdot \epsilon_\pi + \gamma R \cdot \mathbb{E}_{(s,a) \sim \rho_{T}^{\pi_D}} \int_{s'} \left| T(s'|s,a) -\hat{T}(s'|s,a) \right| \dif s' + \gamma R \cdot d_{\FF}(\rho_{T}^{\pi_D}, \rho_{\hat{T}}^{\pi}) \cdot \text{Vol}(\sspace) \\
& = R \cdot \epsilon_\pi + 2 \gamma R \cdot \mathbb{E}_{(s,a) \sim \rho_{T}^{\pi_D}} d_\text{TV}(T(\cdot|s,a), \hat{T}(\cdot|s,a)) + \gamma R \cdot d_{\FF}(\rho_{T}^{\pi_D}, \rho_{\hat{T}}^{\pi}) \cdot \text{Vol}(\sspace) \\
& \leq R \cdot \epsilon_\pi + \gamma R \cdot \mathbb{E}_{(s,a) \sim \rho_{T}^{\pi_D}} \sqrt{2 D_\text{KL}(T(\cdot|s,a), \hat{T}(\cdot|s,a))} + \gamma R \cdot d_{\FF}(\rho_{T}^{\pi_D}, \rho_{\hat{T}}^{\pi}) \cdot \text{Vol}(\sspace)~,\\
\end{aligned}
\end{equation}
where the last inequality holds due to Pinsker’s inequality, which completes the proof.
\end{proof}

\section{Hyperparameters Settings}

\begin{table}[htbp]
	\small
	\centering
	\caption{Hyperparameter settings for AMPO results. $[a,b,x,y]$ denotes a thresholded linear function, \emph{i.e.} at epoch $e$, $f(e) = \min(\max(x+\frac{e-a}{b-a} \cdot (x-y), x), y)$.}
	\begin{tabular}{cccccccc}\toprule
	\label{table:hyper}
		&  & \rotatebox[origin=c]{60}{\parbox{1cm}{\textbf{Inverted\\Pendulum}}} & \multicolumn{1}{c}{\rotatebox[origin=c]{60}{\textbf{Swimmer}}} & \multicolumn{1}{c}{\rotatebox[origin=c]{60}{\textbf{Hopper}}} & \multicolumn{1}{c}{\rotatebox[origin=c]{60}{\textbf{Walker2d}}} & \multicolumn{1}{c}{\rotatebox[origin=c]{60}{\textbf{Ant}}} &\rotatebox[origin=c]{60}{\parbox{1cm}{~~~\textbf{Half\\Cheetah}}} \\
		\midrule
		& \multirow{2}{*}{network architecture} & \multicolumn{6}{c}{{MLP with four hidden layers of size 200}} \\
		& & \multicolumn{6}{c}{feature extractor: four hidden layers; decoder: one output layer} \\
		\hline
		& real samples for & \multirow{2}{*}{300}& \multicolumn{1}{|c}{\multirow{2}{*}{2000}}&\multicolumn{3}{|c}{\multirow{2}{*}{5000}}\\
		& model pretraining &  & \multicolumn{1}{|c}{}& \multicolumn{3}{|c}{}\\
		\hline
		& real steps & \multirow{2}{*}{250} & \multicolumn{5}{|c}{\multirow{2}{*}{1000}}\\
		& per epoch &  & \multicolumn{5}{|c}{} \\
		\hline
		& model adaptation & \multirow{2}{*}{64} & \multicolumn{5}{|c}{\multirow{2}{*}{256}}\\
		&  batch size &  & \multicolumn{5}{|c}{} \\
		\hline
		\multirow{2}{*}{$E$} & real steps between & \multirow{2}{*}{125} & \multicolumn{5}{|c}{\multirow{2}{*}{250}} \\
		& model training & & \multicolumn{5}{|c}{} \\
		\hline
		\multirow{2}{*}{$F$} & model rollout & \multicolumn{6}{c}{\multirow{2}{*}{100000}} \\
		& batch size & \multicolumn{6}{c}{} \\
		\hline
		\multirow{2}{*}{$B$} & \multirow{2}{*}{ensemble size} & \multicolumn{6}{c}{\multirow{2}{*}{7}} \\
		& & \multicolumn{6}{c}{} \\
		\hline
		\multirow{2}{*}{$G_3$} & policy updates & \multirow{2}{*}{30}&  \multicolumn{4}{|c|}{\multirow{2}{*}{20}} & \multirow{2}{*}{40} \\
		& per real step   &  &\multicolumn{4}{|c|}{}& \\
		\hline
		\multirow{2}{*}{$k$} & \multirow{2}{*}{rollout length} & \multirow{2}{*}{1} & \multirow{2}{*}{1} & \multirow{2}{*}{[5,45,1,20]} & \multirow{2}{*}{1} & \multirow{2}{*}{[10,50,1,20]} &  \multirow{2}{*}{[1,30,1,5]} \\
		& & &  &  &  &   \\
		\hline
		\multirow{2}{*}{$G_2$} & model adaptation & \multirow{2}{*}{6}& \multirow{2}{*}{40} & \multirow{2}{*}{400} & \multirow{2}{*}{1000} & \multirow{2}{*}{3000} & \multirow{2}{*}{[1,30,100,1000]}\\
		&  updates & &  & & & & \\
		\hline
		& model adaptation & \multirow{2}{*}{6} &\multirow{2}{*}{6} & \multirow{2}{*}{40} & \multirow{2}{*}{80} & \multirow{2}{*}{60} & \multirow{2}{*}{30} \\
		& early stop epoch & & &  &  & &  \\
		\bottomrule
	\end{tabular}
\end{table}

\section{MMD Variant of AMPO}
Besides Wasserstein distance, we can use other distribution divergence metrics to align the features. 
MMD is another instance of IPM when the witness function class is the unit ball in a reproducing kernel Hilbert space (RKHS). 
Let $k$ be the kernel of the RKHS $\mathcal{H}_k$ of functions on $\mathcal{X}$. Then the squared MMD in $\mathcal{H}_k$ between two feature distributions $\mathbb{P}_{h_e}$ and $\mathbb{P}_{h_m}$ is \cite{mmd}:
\begin{equation}
\text{MMD}^2_{k}(\mathbb{P}_{h_e}, \mathbb{P}_{h_m})  \defeq ~ \mathbb{E}_{h_e,h'_e}[k(h_e,h'_e)] +\mathbb{E}_{h_m,h'_m}[k(h_m,h'_m)] -2\mathbb{E}_{h_e,h_m}[k(h_e,h_m)],
\end{equation}
which is a non-parametric measurement based on kernel mappings. In practice, given finite feature samples from distributions $ \{ h_e^1,\cdots,h_e^{N_e} \} \sim \mathbb{P}_{h_e} \ $ and $ \{ h_m^1,\cdots,h_m^{N_m} \} \sim \mathbb{P}_{h_m}$, where $N_e$ and $ N_m$ are the number of real samples and simulated ones, one unbiased estimator of $\text{MMD}^2_{k}(\mathbb{P}_{h_e}, \mathbb{P}_{h_m})$ can be written as follows: 
\begin{equation}
\label{obj:mmd-loss}
\mathcal{L}_{\text{MMD}}(\theta_g) = \frac{1}{N_e (N_e-1)} \sum_{i \neq i'} k(h_e^i,h_e^{i'})  +  \frac{1}{N_m (N_m-1)} \sum_{j \neq j'} k(h_m^j,h_m^{j'}) -\frac{2}{N_e N_m} \sum_{i=1}^{N_e} \sum_{j=1}^{N_m} k(h_e^i, h_m^j).
\end{equation} 
To achieve model adaptation through MMD, we optimize the feature extractor to minimize the above adaptation loss $\mathcal{L}_{\text{MMD}}$ with real $(s,a)$ data and simulated one as input.

When implementing the MMD variant, choosing optimal kernels remains an open problem and we use a linear combination of eight RBF kernels with bandwidths $\{0.001, 0.005, 0.01, 0.05, 0.1, 1, 5, 10\}$. The results on three environments are shown in Figure~\ref{fig:result-mmd}. We observe that using MMD as the distribution divergence measure is also effective in the AMPO framework.

\begin{figure*}[!t]
	\centering
	\includegraphics[width=\textwidth]{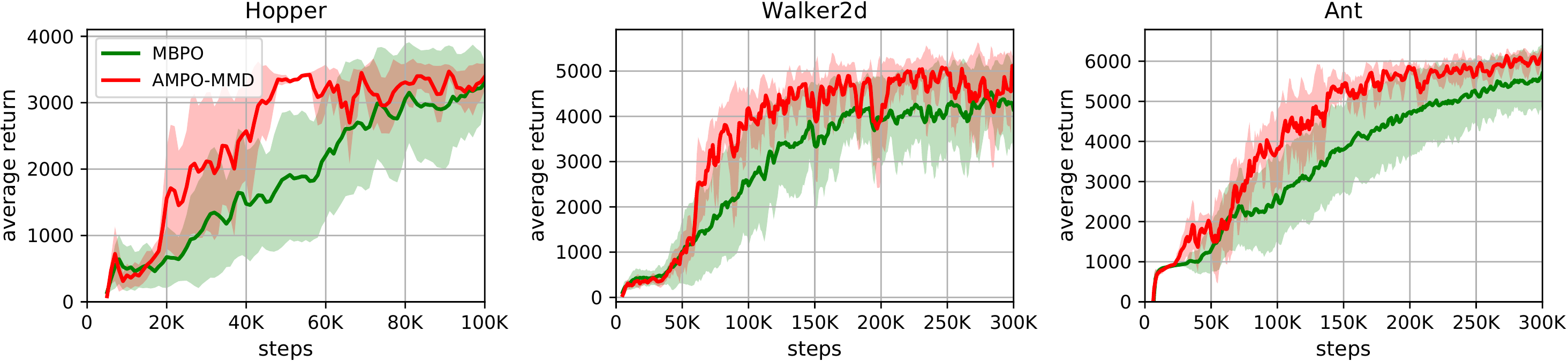}
	\caption{Performance curves of MBPO and MMD variant of AMPO.}
	\label{fig:result-mmd}
\end{figure*}

\section{More Experiment Results}
\subsection{One-step Model Losses}
We show the one-step model losses during the experiments in the other four environments in Figure~\ref{fig:appendix-model-loss}. We find that the conclusion in Section~\ref{sec:model-loss} still holds in these four environments. In InvertedPendulum and Swimmer, the standard deviation is a little larger since the number of real samples for pre-training the model is less.

\subsection{Hyperparameter: Policy Updates}

In MBRL, since we can generate simulated data using the dynamics model, we can take more gradient updates of policy optimization with the simulated data per environment step to accelerate policy learning. However, too many gradient updates for the policy may cause the current model to be inaccurate for the updated policy. Thus the number of policy gradient updates is a quite important hyperparameter in MBRL. We conduct environments with different policy updates, and show the results in Figure~\ref{fig:appendix-policy-update}. We find that when we increase the number of policy updates, the performance of MBPO decreases a little while it doesn't influence AMPO much. It demonstrates that the robustness of AMPO to this hyperparameter.

\subsection{Model Adaptation Early Stopping}

According to the model losses in Figure~\ref{fig:model-loss} and Figure~\ref{fig:appendix-model-loss}, we find that after certain number of environment steps, the model loss difference between AMPO and MBPO becomes small. So in AMPO we early stop the model adaptation procedure after collecting a certain number of real data, such as 40K in the Hopper environment. We then conduct experiments without early stopping model adaptation and the results are demonstrated in Figure~\ref{fig:appendix-nosteop}. We find that keeping adapting the dynamics model throughout the whole learning process does not bring performance improvement. This indicates that model adaptation makes a difference only when the model training data is insufficient. So we set a model adaptation early stopping epoch for each environment (see Table~\ref{table:hyper} for detail) to improve the computation efficiency. 

\subsection{Computation Time}

Since AMPO adds the model adaptation procedure based on MBPO, we would like to see its computation time compared with MBPO. We show in Figure~\ref{fig:appendix-computation-time} the computation time ratio of AMPO against MBPO using the same device. We find that in most environments AMPO needs slightly more computation time than MBPO, and the extra overhead is not much. In InvertedPendulum, however, the computational overhead of AMPO is less than MBPO, of which the reason may be that the next model training in AMPO needs less computation after one model adaptation.

\subsection{Adaptation Strategy}

In AMPO, we untie the feature extractor weights for two data distributions and learn the two feature extractors simultaneously, which is a variant of the adaptation strategy in Adversarial Discriminative Domain Adaptation (ADDA) \cite{adda}. Differently in ADDA the feature mapping for source domain (\ie real data) is fixed. Another alternative is to share the feature extractor weights between the two data distributions. From the comparison in Figure~\ref{fig:appendix-adaptation-strategy}, we observe that the performance of these three adaptation strategies differs not much but AMPO performs slightly better. 
\begin{figure*}[t]
	\centering
	\label{fig:appendix-model-loss}
	\includegraphics[width=\textwidth]{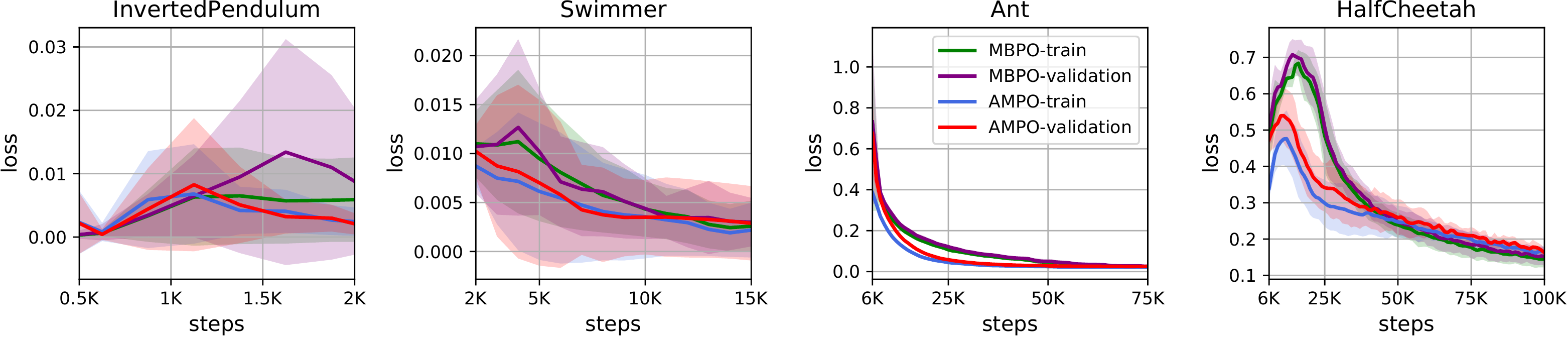}
	\caption{One-step model losses in other four environments.}
\end{figure*}
\begin{figure*}[t]
	\centering
	\subfigure[Policy updates.]{
		\label{fig:appendix-policy-update}
		\includegraphics[width=0.23\textwidth]{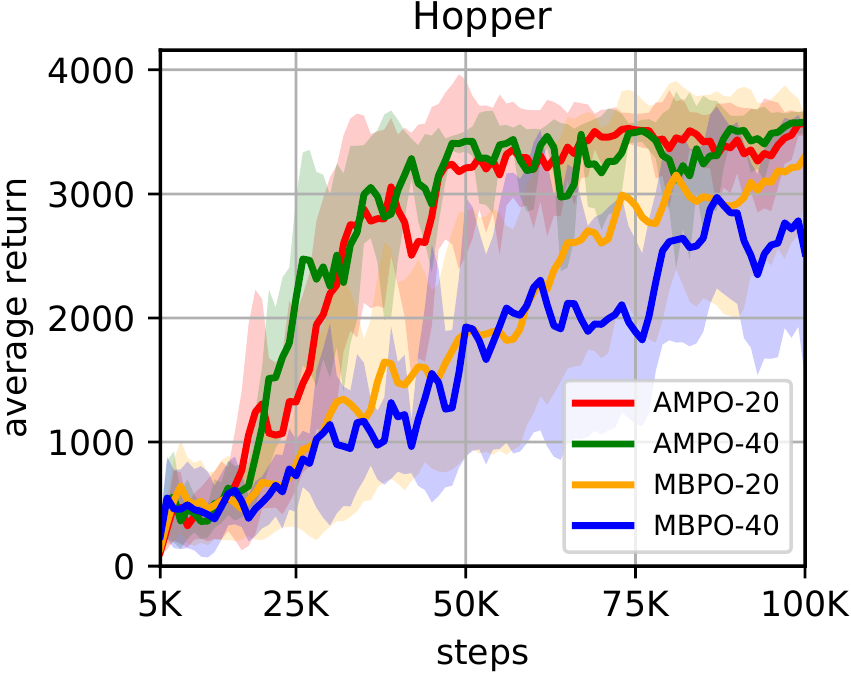}
	}
	\subfigure[Early stopping.]{
		\label{fig:appendix-nosteop}
		\includegraphics[width=0.23\textwidth]{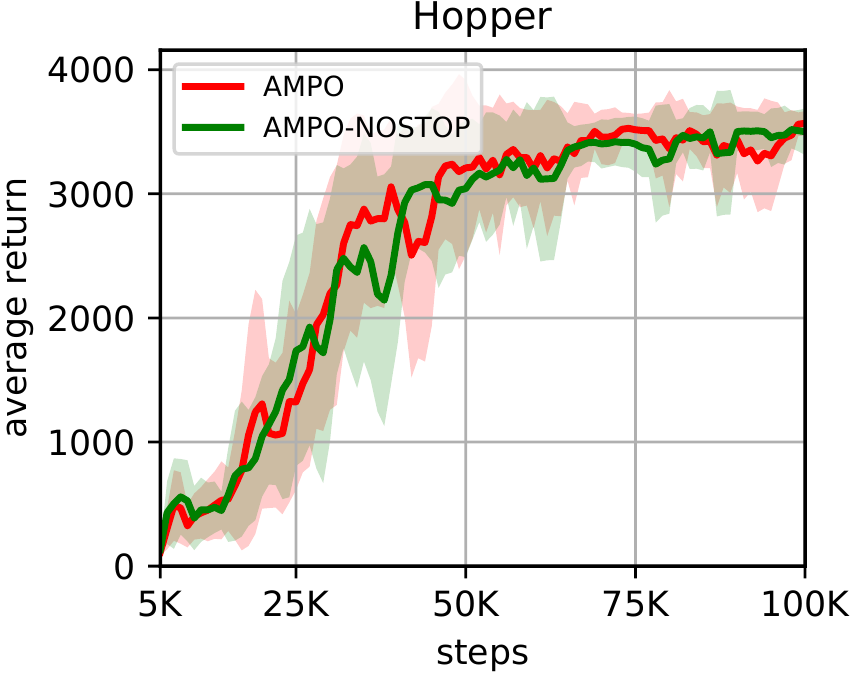}
	}
	\subfigure[Computation time.]{
		\label{fig:appendix-computation-time}
		\includegraphics[width=0.23\textwidth]{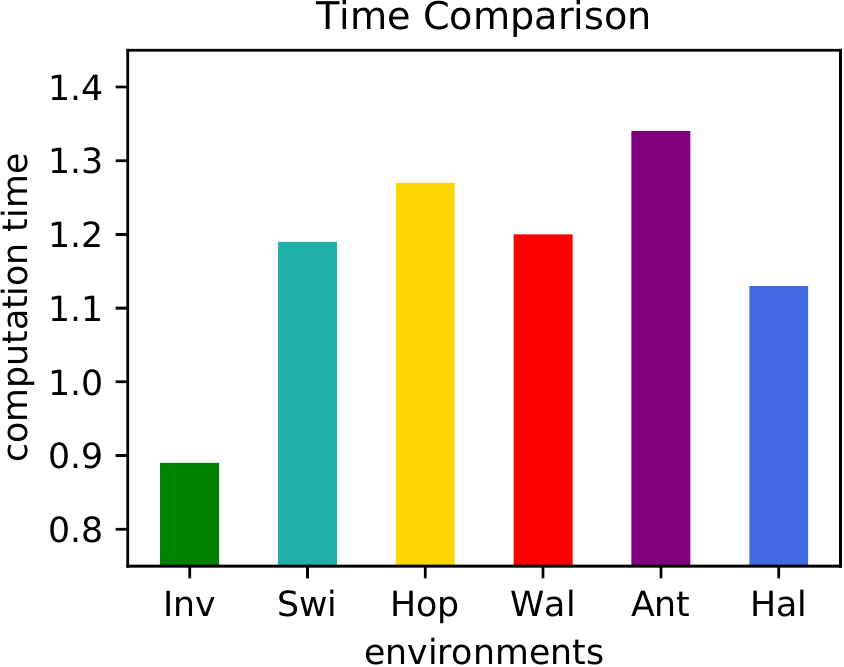}
	}
	\subfigure[Adaptation strategy.]{
		\label{fig:appendix-adaptation-strategy}
		\includegraphics[width=0.23\textwidth]{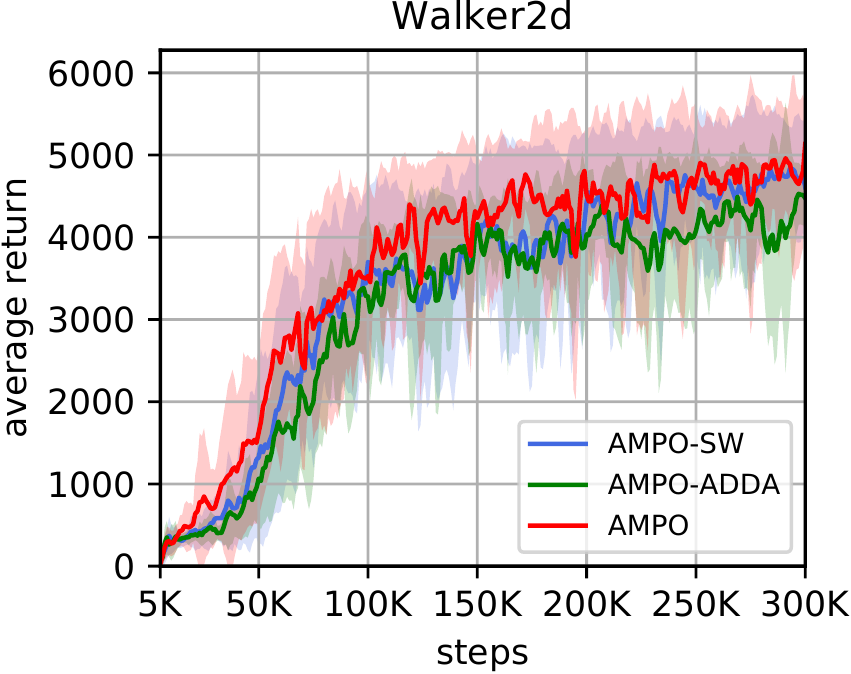}
	}
	\caption{More empirical analysis. (a) AMPO-20 means we use AMPO and the number of policy updates is 20. (b) AMPO-NOSTOP denotes the AMPO variant without early stopping the model adaptation procedure. (c) The ratio of computation time of AMPO to that of MBPO. (d) AMPO-SW denotes the AMPO variant of sharing the feature extractor weights for two data distributions. AMPO-ADDA denotes the AMPO variant of fixing the feature extractor of real data.}
	\label{fig:design}
\end{figure*}

\section{A Different View of Analysis}

In this section, we provide an alternative perspective on the expected return lower bound derivation.

\begin{lemma}
\label{lemma:alternative}
(\cite{slbo}, Lemma 4.3; \cite{mopo}, Lemma 4.1) Let $T$ be the real dynamics and $\hat{T}$ be the dynamics model. Let $G_{\hat{T}}^\pi(s, a) \defeq \mathbb{E}_{s' \sim \hat{T}(\cdot|s,a)}[V^\pi_{T}(s')]- \mathbb{E}_{s' \sim T(\cdot|s,a)}[V^\pi_{T}(s')]$. Then, 
$$
\hat{\eta}[\pi] - \eta[\pi] = \gamma \mathbb{E}_{(s,a)\sim \rho^\pi_{\hat{T}}}[G_{\hat{T}}^\pi(s, a)].
$$
\end{lemma}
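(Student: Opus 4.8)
The plan is to establish the identity by telescoping the per-state value gap along trajectories generated by the model $\hat T$. First I would write both returns as expectations of the associated value functions over the common initial distribution: with $V^\pi_{T}(s)\defeq\mathbb{E}_\pi[\sum_{t\ge0}\gamma^t r(s_t,a_t)\mid s_0=s]$ (and $V^\pi_{\hat T}$ likewise under $\hat T$), the normalized-return convention used in the paper — $\eta[\pi]=\mathbb{E}_{\rho^\pi_T}[r]=(1-\gamma)\,\mathbb{E}_{s_0\sim\nu_0}[V^\pi_{T}(s_0)]$ — gives $\hat\eta[\pi]-\eta[\pi]=(1-\gamma)\,\mathbb{E}_{s_0\sim\nu_0}\big[\Delta(s_0)\big]$, where $\Delta(s)\defeq V^\pi_{\hat T}(s)-V^\pi_{T}(s)$. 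I would keep this $(1-\gamma)$ factor explicit throughout, since tracking it correctly is precisely what converts a $\gamma/(1-\gamma)$ into the $\gamma$ that appears in the statement.

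The key algebraic step is to subtract the two Bellman equations, $V^\pi_{\hat T}(s)=\mathbb{E}_{a\sim\pi(\cdot|s)}\big[r(s,a)+\gamma\,\mathbb{E}_{s'\sim\hat T(\cdot|s,a)}[V^\pi_{\hat T}(s')]\big]$ and the analogue for $V^\pi_T$ under $T$. The reward terms cancel, and inserting $\pm\,\gamma\,\mathbb{E}_{a\sim\pi(\cdot|s),\,s'\sim\hat T(\cdot|s,a)}[V^\pi_T(s')]$ regroups the remainder into the one-step recursion
\[
\Delta(s) \;=\; \gamma\,\mathbb{E}_{a\sim\pi(\cdot|s),\, s'\sim\hat T(\cdot|s,a)}\big[\Delta(s')\big] \;+\; \gamma\,\mathbb{E}_{a\sim\pi(\cdot|s)}\big[G^\pi_{\hat T}(s,a)\big].
\]
This exhibits $\Delta$ as the (discounted) value function of a Markov reward process that runs the model $\hat T$ under $\pi$ and collects per-step payoff $\gamma\,\mathbb{E}_{a\sim\pi}[G^\pi_{\hat T}(s,a)]$.

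Next I would unroll the recursion — equivalently, verify it has a unique bounded solution and iterate it — to get $\Delta(s_0)=\gamma\sum_{t=0}^{\infty}\gamma^{t}\,\mathbb{E}\big[G^\pi_{\hat T}(s_t,a_t)\mid s_0,\hat T,\pi\big]$. Taking the expectation over $s_0\sim\nu_0$, multiplying by the normalization constant $(1-\gamma)$ from the first step, and recognizing $(1-\gamma)\sum_{t\ge0}\gamma^t$ times the $t$-step state--action law under $(\hat T,\pi)$ started from $\nu_0$ as exactly the occupancy measure $\rho^\pi_{\hat T}$, I obtain $\hat\eta[\pi]-\eta[\pi]=\gamma\,\mathbb{E}_{(s,a)\sim\rho^\pi_{\hat T}}[G^\pi_{\hat T}(s,a)]$, which is the claim.

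The obstacle here is bookkeeping rather than any deep idea: I need to justify that every series converges absolutely and that the recursion may legitimately be unrolled to infinity. This is immediate from the standing assumption $R\defeq\sup_{s,a}r(s,a)<\infty$ together with $\gamma\in(0,1)$, which bound $\lvert V^\pi_T\rvert,\lvert V^\pi_{\hat T}\rvert\le R/(1-\gamma)$ and hence $\lvert G^\pi_{\hat T}\rvert\le 2R/(1-\gamma)$, so dominated convergence applies and the geometric series is summable; I should also record explicitly, as in Lemma~\ref{lemma}, the assumption that $T$ and $\hat T$ share the same initial state distribution, which is what makes the two $\nu_0$-terms cancel in the first step. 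The only place an error could realistically slip in is the normalization constant relating $\eta$, the value functions, and $\rho$, so I would fix that convention once at the outset and use it consistently.
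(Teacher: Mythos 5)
Your argument is correct. Note that the paper itself gives no proof of this lemma --- it is imported verbatim from the cited references --- so there is nothing internal to compare against; your Bellman-recursion telescoping (subtract the two Bellman equations, insert $\pm\,\gamma\,\mathbb{E}_{s'\sim\hat T}[V^\pi_T(s')]$, unroll, and absorb the $(1-\gamma)$ normalization into $\rho^\pi_{\hat T}$) is precisely the standard simulation-lemma proof used in those sources, and your bookkeeping of the normalization constant and of the shared initial distribution is exactly where the factor of $\gamma$ (rather than $\gamma/(1-\gamma)$) comes from. The only cosmetic caveat is that bounding $|V^\pi_T|\le R/(1-\gamma)$ requires the reward to be bounded in absolute value, not merely above as the literal definition $R=\sup_{s,a}r(s,a)$ suggests, but the paper implicitly makes the same assumption elsewhere.
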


Let $\FF_1$ be a collection of functions from $\mathcal{S}\times\mathcal{A}$ to $\RR$ and $\FF_2$ be a collection of functions from $\mathcal{S}$ to $\RR$. With Lemma~\ref{lemma:alternative}, under the assumption that $G_{\hat{T}}^\pi(s, a) \in \FF_1$ and $V^\pi_{T}(s')\in \FF_2$, we have 
\begin{equation}
    \begin{aligned}
    \hat{\eta}[\pi] - \eta[\pi] &= \gamma \mathbb{E}_{(s,a)\sim \rho^\pi_{\hat{T}}}[G_{\hat{T}}^\pi(s, a)] - \gamma \mathbb{E}_{(s,a)\sim \rho^{\pi_D}_{T}}[G_{\hat{T}}^\pi(s, a)] + \gamma \mathbb{E}_{(s,a)\sim \rho^{\pi_D}_{T}}[G_{\hat{T}}^\pi(s, a)] \\
    & \leq \gamma \sup_{f\in \mathcal{F}_1} \left| \mathbb{E}_{(s,a)\sim \rho^{\pi_D}_{T}}[f(s, a)] - \mathbb{E}_{(s,a)\sim \rho^{\pi}_{\hat{T}}}[f(s, a)]  \right|  \\ 
    & +\gamma \mathbb{E}_{(s,a)\sim \rho^{\pi_D}_{T}} \left[\sup_{g\in\mathcal{F}_2}\left| \mathbb{E}_{s' \sim \hat{T}(\cdot|s,a)}[g(s')] - \mathbb{E}_{s' \sim T(\cdot|s,a)}[g(s')] \right|\right] \\
    & = \gamma d_{\mathcal{F}_1}(\rho^{\pi_D}_{T}, \rho^{\pi}_{\hat{T}}) + \gamma \mathbb{E}_{(s,a)\sim \rho^{\pi_D}_{T}}[d_{\mathcal{F}_2}(\hat{T}(\cdot|s,a), T(\cdot|s,a) )].
    \end{aligned}
\end{equation}
By rewriting it as a lower bound form, we have

$$
\eta[\pi] \geq \hat{\eta}[\pi] - \gamma d_{\mathcal{F}_1}(\rho^{\pi_D}_{T}, \rho^{\pi}_{\hat{T}}) - \gamma \mathbb{E}_{(s,a)\sim \rho^{\pi_D}_{T}}[d_{\mathcal{F}_2}(\hat{T}(\cdot|s,a), T(\cdot|s,a) )].
$$

Similarly, if we assume the reward function is bounded,  $d_{\mathcal{F}_2}(\hat{T}, T)$ can also be a total variation distance since $\|V^\pi_T \|_\infty$ is bounded.
By comparing this lower bound to the one in Theorem~\ref{theorem}, it seems this one might be tighter and there is no extra $\epsilon_\pi$ term. But we should notice that the assumptions made here are stronger. To be more specific, we assume $G_{\hat{T}}^\pi$ satisfies the constraint while in Theorem~\ref{theorem} we only assume the model $\hat{T}$ to satisfy the constraint, which is easier to hold.

\end{document}